\newcommand{\R}{{I\!\!R}}
\newcommand{\X}{{\bf x}}
\newcommand{\Z}{{\bf z}}
\newcommand{\Y}{{\bf y}}
\newcommand{\fma}{{\small {\tt fma}}}
\begin{document}

\newcommand\relatedversion{}
\renewcommand\relatedversion{\thanks{An extended version of the paper can be accessed at \protect\url{https://arxiv.org/abs/1902.09310}}} % Replace URL with link to full paper or comment out this line

\title{\Large Matrix-Free Jacobian Chaining}
\author{Uwe Naumann\thanks{Software and Tools for Computational Engineering, RWTH Aachen University, Germany.}}

\date{}

\maketitle

\fancyfoot[R]{\scriptsize{Copyright \textcopyright\ 20XX by SIAM\\
Unauthorized reproduction of this article is prohibited}}

\begin{abstract}
%	TODO
%	\begin{itemize}
%		\item assumed denseness potentially relaxed (simple extension); but need to solve compression problem first
%		\item update github code (bug removed)
%		\item implementation uses upper bound on memory to discard infeasible adjoints; use large enough bound make behave like unconstrained case
%		\item generalization for real-world cost measures by introducing
%appropriate weights
%	\end{itemize}
The efficient computation of Jacobians represents a fundamental challenge in
computational science and engineering. Large-scale modular numerical
simulation programs can be regarded as sequences of evaluations of in our case
differentiable subprograms with corresponding elemental Jacobians. The latter are
typically not available. Tangent and adjoint versions of the
individual subprograms are assumed to be given as results of algorithmic
differentiation 
instead. The classical {\sc (Jacobian) Matrix Chain Product} problem is
reformulated in terms of 
	matrix-free Jacobian-matrix (tangents) and matrix-Jacobian products
	(adjoints), subject to limited memory for storing information required by 
	latter.
	All numerical results can be reproduced using an open-source reference
	implementation.
\end{abstract}

\section{Introduction} \label{sec:intro}

This paper extends our prior work on computational cost-efficient accumulation
of Jacobian matrices. The corresponding combinatorial {\sc Optimal Jacobian
Accumulation} (OJA) problem was shown to be
NP-complete in \cite{Naumann2008OJa}. 
Generalization to higher derivatives is the subject of \cite{Naumann2023Ano}.
Elimination techniques yield different
structural variants of OJA discussed in
\cite{Naumann2004Oao}. Certain special cases turn out to be computationally
tractable as described in \cite{Griewank2003AJa} and \cite{Naumann2008Ove}.

Relevant closely related work by others includes the introduction of
{\sc Vertex Elimination} (VE) \cite{Griewank1991OtC}, an integer programming
approach to VE \cite{Chen2012AIP}, computational experiments with VE
\cite{Forth2004JCG,Tadjouddine2008VoA}, and the formulation of OJA as LU factorization
\cite{Pryce2008FAD}.
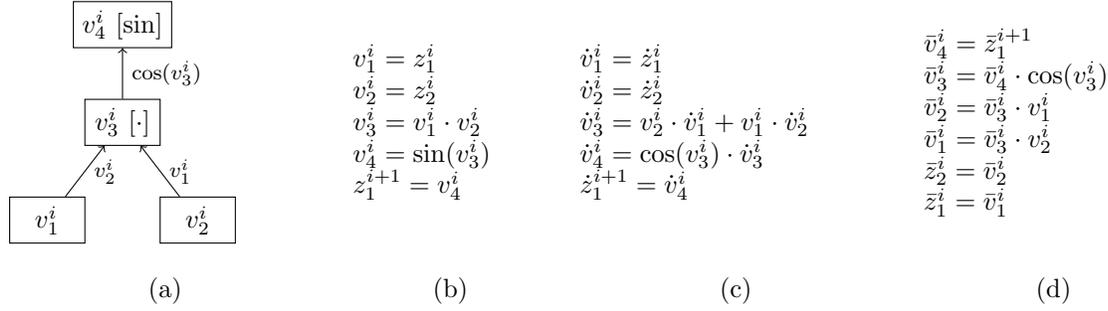
\begin{figure*}[hbt]
\begin{tabular}{cccc}
\begin{minipage}[c]{.24\linewidth}
\begin{tikzpicture}[scale=1, transform shape, rectangle]
  \begin{pgfscope}
\tikzstyle{every node}=[draw,rectangle,minimum width=1cm]
	  \node (0) at (0,0) {$v^i_1$};
	  \node (1) at (2,0) {$v^i_2$};
	  \node (2) at (1,1.3) {$v^i_3~[\cdot]$};
	  \node (3) at (1,2.6) {$v^i_4~[\sin]$};
  \end{pgfscope}
  \begin{scope}[->]
          \draw (0) -- (2) node[midway,right] {\footnotesize $v^i_2$};
          \draw (1) -- (2) node[midway,right] {\footnotesize $v^i_1$};
          \draw (2) -- (3) node[midway,right] {\footnotesize $\cos(v^i_3)$};
  \end{scope}
\end{tikzpicture}

\end{minipage} &
\begin{minipage}[c]{.15\linewidth}
$v_1^i=z_1^i $ \\
$v_2^i=z_2^i $ \\
$v_3^i=v_1^i \cdot v_2^i$ \\
$v_4^i=\sin(v_3^i)$ \\
$z_1^{i+1}=v_4^i$
\end{minipage} &
\begin{minipage}[c]{.24\linewidth}
$\dot{v}_1^i=\dot{z}_{1}^i $ \\
$\dot{v}_2^i=\dot{z}_2^i $ \\
$\dot{v}_3^i=v_2^i \cdot \dot{v}_1^i +v_1^i \cdot \dot{v}_2^i$ \\
$\dot{v}_4^i=\cos(v_3^i) \cdot \dot{v}_3^i$ \\
$\dot{z}_1^{i+1}=\dot{v}_4^i$
\end{minipage} &
\begin{minipage}[c]{.2\linewidth}
$\bar{v}_4^i=\bar{z}_1^{i+1}$ \\
$\bar{v}_3^i=\bar{v}_4^i \cdot \cos(v_3^i)$ \\
$\bar{v}_2^i=\bar{v}_3^i \cdot v_1^i$ \\
$\bar{v}_1^i=\bar{v}_3^i \cdot v_2^i$ \\
$\bar{z}_2^i=\bar{v}_2^i$ \\
$\bar{z}_1^i=\bar{v}_1^i$
\end{minipage} \\
\\
(a) & (b) & (c) & (d)
\end{tabular}
\caption{Simple Example: Labeled DAG (a); primal (b); scalar tangent (c); scalar adjoint (d)} \label{fig:ex1}
\end{figure*}
\begin{figure*}
\begin{equation} \label{eqn:dp1}
\fma_{j,i}=
\begin{cases}
\min(\dot{\fma}_i,\bar{\fma}_i) & j=i \\
\min_{i \leq k < j} \left (\fma_{j,k+1}+\fma_{k,i} + m_j \cdot m_k \cdot n_i \right ) & j>i \; .
\end{cases}
\end{equation}
\end{figure*}

Following a brief discussion of essential prerequisites 
in the remainder of this section, including
 comments on the NP-complete \cite{Garey1979CaI}
{\sc Jacobian Chain Product} problem,
we formulate the combinatorial {\sc Matrix-Free Jacobian Chain Product} 
problem in Section~\ref{sec:GJCP}. NP-completeness of the latter follows
trivially. In Sections~\ref{sec:GDJCPB} and \ref{sec:LMGDJCPB}
we present dynamic programming algorithms for the solution
of two relevant special cases given by the 
{\sc Matrix-Free Dense Jacobian Chain Product Bracketing} 
and 
{\sc Limited-Memory Matrix-Free Dense Jacobian Chain Product Bracketing} 
problems. 
An open-source reference implementation is presented in Section~\ref{sec:impl}.
Numerical results on a number of randomly generated test problems
illustrate the potential benefits to be expected. First steps towards
real-world applications are supported with encouraging results obtained 
for a tunnel 
flow simulation performed with OpenFOAM\footnote{\tt www.openfoam.com} and
described in Section~\ref{sec:cs}. Conclusions are drawn in 
Section~\ref{sec:concl} in the context of additional aspects to be investigated
by ongoing and future research.

Let a multivariate vector function
$$
\Y=F(\X) : \R^n \rightarrow \R^m
$$
(in the following referred to as the {\em primal} function)
be differentiable over the domain of interest, and let
$
F=F_q \circ F_{q-1} \circ \ldots \circ F_2 \circ F_1
$
be such that $$\Z_i=F_i(\Z_{i-1}) : \R^{n_i} \rightarrow \R^{m_i} \; ,$$ for $i=1,\ldots,q$
and $\Z_0=\X,$ $\Y=\Z_q.$
According to the chain rule of differentiation, the Jacobian $F'=F'(\X)$ of $F$ is
equal to
\begin{equation} \label{eqn:jcp}
F' \equiv \frac{d F}{d \X}=F'_q \cdot F'_{q-1} \cdot \ldots \cdot F'_1 \in \R^{m \times n} \; .
\end{equation}
Without loss of generality, we are interested in the minimization of the 
computational cost in terms of {\em fused multiply-add} (\fma) operations 
performed during the evaluation of Equation~(\ref{eqn:jcp}). Weighted variants of this
abstract cost measure allow for translation into practically more relevant
objectives, specifically, run time. 

Algorithmic differentiation (AD) \cite{Griewank2008EDP}
offers two fundamental modes for {\em preaccumulation} of the elemental Jacobians
$$F'_i = F'_i(\Z_{i-1}) \in \R^{m_i \times n_i}$$ prior to the evaluation of the matrix chain product in Equation~(\ref{eqn:jcp}). Directional derivatives are computed in {\em scalar tangent mode} as
\begin{equation} \label{eqn:st}
\dot{\Z}_i=F'_i \cdot \dot{\Z}_{i-1} \in \R^{m_i} \; .
\end{equation}
Accumulation of the entire Jacobian requires evaluation of $n_i$ tangents
in the Cartesian basis directions ${\bf e}_j \in \R^{n_i}$,
if
$F'_i$ is dense. Potential sparsity can and should be detected \cite{Griewank2002DJS} and exploited \cite{Gebremedhin2005WCI,Hossain2005CSJ}.
We denote the computational cost of evaluating a subchain
$F'_j \cdot \ldots \cdot F'_i,$ $j>i,$ of Equation~(\ref{eqn:jcp})
as $\fma_{j,i}$. The computational cost of evaluating $F'_i$ in tangent
mode is denoted as $\fma_{i,i}=\dot{\fma}_i$.

{\em Scalar adjoint mode} yields
\begin{equation} \label{eqn:sa}
\bar{\Z}_{i-1}= \bar{\Z}_i \cdot F'_i \in \R^{1 \times n_i}
\end{equation}
and, hence, dense Jacobians by $m_i$ reevaluations of Equation~(\ref{eqn:sa}) 
with
$\bar{\Z}_i$ ranging over the Cartesian basis directions in $\R^{m_i}.$
The scalar adjoint of $\Z_i$ can be interpreted as the derivative
of some scalar objective with respect to $\Z_i$, yielding
$\bar{\Z}_i \in \R^{1 \times m_i}$ as a row vector.
The computational cost of evaluating $F'_i$ in adjoint
mode is denoted as $\fma_{i,i}=\bar{\fma}_i$.
Further formalization of this cost
estimate will follow in Section~\ref{sec:GJCP}. 
%Combinatorially more
%challenging Jacobian accumulation methods based on elimination techniques
%applied to computational graphs \cite{Naumann2004Oao} will not be considered
%here.
%While they may yield a further reduction of $\fma_{i,i}$ the resulting
%irregularity of memory accesses makes actual gains in
%computational performance hard to achieve.

The {\sc [Dense] Jacobian Chain Product Bracketing} problem asks for a
bracketing of the right-hand side of Equation~(\ref{eqn:jcp})
which minimizes the number of \fma\ operations.
It can be solved by dynamic programming
\cite{Bellman1957DP,Godbole1973Oec}
even if the individual factors
are sparse. Sparsity patterns of all subproducts need to be evaluated
symbolically in this case \cite{Griewank2003AJa}.

The dynamic programming recurrence in 
Equation~(\ref{eqn:dp1}) yields an optimal bracketing at a computational cost
of $O(q^3).$
Facilitated by the
{\em overlapping subproblems} and {\em optimal substructure}
properties of {\sc Jacobian Chain Product Bracketing}, the optimization of enclosing chains
looks up tabulated solutions to all subproblems in constant time.

For example, a dense Jacobian chain product of length $q=4$ with
$F'_4 \in \R^4,$ $F'_3 \in \R^{1 \times 4},$ $F'_2 \in \R^{4 \times 5},$ $F'_1 \in \R^{5 \times 3}$ and
$\fma_{4,4}=21,$
$\fma_{3,3}=5,$
$\fma_{2,2}=192,$
$\fma_{1,1}=84$ yields the optimal bracketing
$
F'=F'_4 \cdot ((F'_3 \cdot F'_2) \cdot F'_1)
$
with a cumulative cost of \small $$ (21+5+192+84+1\cdot 4 \cdot 5 + 1 \cdot 5 \cdot 3 + 4 \cdot 1 \cdot 3=)~349 \fma.$$
\normalsize
The more general {\sc Jacobian Chain Product} problem asks for some \fma-optimal
way
to compute $F'$ without the restriction of the search space to valid
bracketings of Equation~(\ref{eqn:jcp}). For example, the matrix product
$$
\begin{pmatrix}
6 & 0 \\
0 & 7 \\
\end{pmatrix}
\begin{pmatrix}
7 & 0 \\
0 & 6 \\
\end{pmatrix}=
\begin{pmatrix}
42 & 0 \\
0 & 42 \\
\end{pmatrix}
$$
can be evaluated at the expense of a single \fma\ as opposed
to two by exploiting commutativity of scalar multiplication.
{\sc Jacobian Chain Product} is known to be NP-complete \cite{Naumann2023Ano}.

\section{\sc Matrix-Free Jacobian Chain Product} \label{sec:GJCP}

The $F_i=F_i(\Z_{i-1})$ induce labeled directed acyclic graphs (DAGs) $G_i=G_i(\Z_{i-1})=(V_i,E_i)$
for $i=1,\ldots,q.$
Vertices in $V_i=\{v^i_j : j=1,\ldots,|V_i|\}$ represent the elemental arithmetic operations $\varphi^i_j \in \{+,\sin,\ldots\}$ executed by the implementation of $F_i$ for given $\Z_{i-1}$. Edges $(j,k) \in E_i \subseteq V_i \times V_i$
mark data dependencies between arguments and results of elemental operations. They are labeled with elemental partial derivatives $$
\frac{\partial \varphi^i_k}{\partial v^i_j} \; , \quad k:~(j,k) \in E_i
$$ of the
elemental functions with respect to their arguments. The concatenation of all 
$G_i$ yields the DAG $G$ of the entire program $F$.
A sample DAG is shown in Figure~\ref{fig:ex1}~(a) corresponding to the primal program in Figure~\ref{fig:ex1}~(b). Note that a single evaluation of the adjoint in
Figure~\ref{fig:ex1}~(d) delivers both gradient entries for $\bar{z}_1^{i+1}=1,$
while two evaluations of the tangent in Figure~\ref{fig:ex1}~(c) with $\dot{\Z}^i=(1~0)^T$ and $\dot{\Z}^i=(0~1)^T$ are required to perform the same task.

Preaccumulation of the elemental Jacobians $F'_i \in \R^{m_i \times n_i}$ requires
either $n_i$ evaluations of the scalar tangent or
$m_i$ evaluations of the scalar adjoint. In order to avoid unnecessary
reevaluation of the function values and of the elemental partial derivatives, we
switch to vectorized versions of tangent and adjoint modes.

For given
$\Z_{i-1} \in \R^{n_i}$ and
$\dot{Z}_{i-1} \in \R^{n_i \times \dot{n}_i}$ the
Jacobian-free evaluation of $$\dot{Z}_i = F_i'(\Z_{i-1}) \cdot \dot{Z}_{i-1} \in \R^{m_i \times \dot{n}_i}$$
in {\em vector tangent mode} is denoted as
\begin{equation} \label{eqn:vt}
\dot{Z}_i := \dot{F}_i(\Z_{i-1}) \cdot \dot{Z}_{i-1} \; .
\end{equation}
Preaccumulation of a dense $F'_i$ requires $\dot{Z}_{i-1}$ to be equal to the
identity $I_{n_i} \in \R^{n_i \times n_i}.$
Equation~(\ref{eqn:vt}) amounts to the simultaneous propagation of $\dot{n}_i$
tangents through $G_i.$ Explicit construction
(and storage) of $G_i$ is not required as the computation of tangents
augments the primal arithmetic locally. For example, the code fragments in
Figure~\ref{fig:ex1}~(b) and (c) can be interleaved as
$v^i_j=\ldots;~\dot{v}^i_j=\ldots$ for
$j=1,\ldots,4.$
Tangent propagation induces a computational cost of
\mbox{$\dot{n}_i \cdot |E_i|$} in addition to the invariant cost of the primal function
evaluation ($|V_i|$) augmented with the computation of all elemental partial
derivatives ($|E_i|$).
The invariant memory requirement of the primal function evaluation is increased by the memory requirement of the tangents,
the minimization of which turns out to be NP-complete as a variant of the
{\sc Directed Bandwidth} problem \cite{Naumann2018LMA}.
In the following, the invariant part of the computational
cost will not be included in cost estimates.

Equation~(\ref{eqn:vt}) can be interpreted as the ``product'' of the DAG $G_i$
with the matrix $\dot{Z}_{i-1}.$
%
% TODO: Multiplication of a graph with a matrix is difficult to imagine. Maybe
%       introduce some new operation for the interpretation / evaluation of the
%       graph.
%
If $\dot{Z}_{i-1}$ is dense, then its DAG becomes equal to the directed acyclic
version of the complete bipartite graph $K_{n_{i-1},m_{i-1}}.$ The computation
of $\dot{Z}_i$ amounts to the application of the chain rule to the composite
DAG \cite{Baur1983TCo}. It can be interpreted as forward vertex elimination \cite{Griewank1991OtC} yielding a computational cost of \mbox{$\dot{n}_{i-1} \cdot |E_i|$}.

For given
$\Z_{i-1} \in \R^{n_i}$ and
$\bar{Z}_i \in \R^{\bar{m}_i \times m_i},$ the
Jacobian-free evaluation of $$\bar{Z}_{i-1} = \bar{Z}_i \cdot F_i'(\Z_{i-1}) \in \R^{\bar{m}_i \times n_i}$$
in {\em vector adjoint mode}
is denoted as
\begin{equation} \label{eqn:va}
\bar{Z}_{i-1} := \bar{Z}_i \cdot \bar{F}_i(\Z_{i-1}) \; .
\end{equation}
Preaccumulation of a dense $F'_i$ requires $\bar{Z}_i$ to be equal to the
identity
$I_{m_i} \in \R^{m_i \times m_i}.$
Equation~(\ref{eqn:va}) represents the simultaneous backpropagation of
$\bar{m}_i$ adjoints through $G_i.$ 
In the simplest case, $G_i=(V_i,E_i)$ needs to be stored, for example, 
as an ordered list of its edges $E_i$. Consequently, we estimate the 
size of the memory required for storing $G_i$ by $|E_i|.$ 
In AD, the data structure used to represent the DAG is often referred to
as {\em tape}. Hence, we refer to the memory occupied by DAGs as {\em tape memory}.
Real-world estimates
may require scaling of the abstract memory requirement $|E_i|$ depending on details of the given implementation. In the 
following, {\em checkpointing} \cite{Griewank1992ALG,Griewank2000ARA} is assumed to not be
required for computing the elemental Jacobians $F'_i$ in adjoint mode.
For example, the reversal of the data flow requires the adjoint code in
Figure~\ref{fig:ex1}~(d) to be preceded by (the relevant parts of) the
primal code in Figure~\ref{fig:ex1}~(b) (up to and including the computation of $v_3$).
Vector adjoint propagation induces a computational cost of
$\bar{m}_i \cdot |E_i|$ in addition to the invariant cost of a primal function evaluation.
Similar to tangent mode, the minimization of the additional memory required for storing the adjoints
amounts to a variant of the NP-complete {\sc Directed Bandwidth} problem \cite{Naumann2018LMA}.

Equation~(\ref{eqn:va}) can be interpreted as the ``product'' of
the matrix $\bar{Z}_i$ with the DAG $G_i.$
If $\bar{Z}_i$ is dense, then its DAG becomes equal to the directed acyclic
version of the complete bipartite graph $K_{n_i,m_i}.$ The computation
of $\bar{Z}_{i-1}$ amounts to the application of the chain rule to the
composite DAG \cite{Baur1983TCo}. It can be interpreted as backward vertex elimination
\cite{Griewank1991OtC} yielding a computational cost of
\mbox{$\bar{m}_i \cdot |E_i|$}.

Vector tangent and vector adjoint modes belong to the fundamental set of
functionalities offered by the majority of mature AD software tools, for example, \cite{Giering1998RfA,Griewank1996AAC,Hascoet2013TTA,Sagebaum2019HPD}.
Hence, we assume them to be available for all $F_i$ and we refer to them
simply as tangents and adjoints.

Analogous to {\sc Jacobian Chain Product}
the {\sc Matrix-Free Jacobian Chain Product} problem
asks for an algorithm for computing $F'$ with a minimum number of
\fma\ operations for
given tangents and adjoints for all $F_i$ in Equation~(\ref{eqn:jcp}).
As a generalization of an NP-complete problem, {\sc Matrix-Free Jacobian Chain Product} is computationally
intractable too. The corresponding proof of NP-completeness turns out to be very similar to the
arguments presented in \cite{Naumann2023Ano}, and is hence omitted.

\section{\sc Matrix-Free Dense Jacobian Chain Product Bracketing} \label{sec:GDJCPB}

As a special case of {\sc Matrix-Free Jacobian Chain Product}, 
we formulate the {\sc Matrix-Free Dense Jacobian Chain Product Bracketing} 
problem, a solution for which
can be computed efficiently (polynomially in the length of the chain).
A cost-optimal bracketing of the chain rule of differentiation in form of 
a corresponding sequence of applications of tangents and adjoints is sought.
All $F'_i$ are regarded as dense. Optimal exploitation of likely sparsity
would make the problem NP-complete again due to the NP-completeness of
the various variants of {\sc Coloring} for Jacobian compression 
\cite{Gebremedhin2005WCI}; 
see also comments in Section~\ref{sec:concl}.

\begin{figure*}
\begin{equation} \label{eqn:dp}
\fma_{j,i}=
\begin{cases}
	\centering |E_j| \cdot \min\{n_j,m_j\} & j=i \\ \\
	\min_{i \leq k < j} \left \{ \min \left \{
\begin{split}
&\fma_{j,k+1}+\fma_{k,i} + m_j \cdot m_k \cdot n_i, \\
	&\fma_{j,k+1} + m_j \cdot \sum_{\nu=i}^k |E_\nu|, \\
	&\fma_{k,i} + n_i \cdot \sum_{\nu=k+1}^j |E_\nu|
\end{split}
	\right \} \right \} & j>i \; .
\end{cases}
\end{equation}
\end{figure*}
Formally, the {\sc Matrix-Free Dense Jacobian Chain Product Bracketing}
problem is stated as follows:

Let tangents, evaluating $\dot{F}_i \cdot \dot{Z}_i,$ and adjoints,
evaluating
$\bar{Z}_{i+1} \cdot \bar{F}_i,$ be given for all elemental functions
$F_i,$ $i=1,\ldots,q,$ in Equation~(\ref{eqn:jcp}). Treat
all elemental Jacobians as dense.
For a given positive integer $K,$ is there a sequence of evaluations of
the tangents and/or adjoints, such that the number of \fma\ operations
required for the accumulation of the Jacobian $F'$ does not exceed $K$?

\paragraph{Example.}
A matrix-free dense Jacobian chain product of length two yields the following
eight different bracketings:
\begin{alignat*}{2}
	F'&=\dot{F}_2 \cdot F'_1 &&=\dot{F}_2 \cdot (\dot{F}_1  \cdot I_{n_1})
	=\dot{F}_2 \cdot (I_{m_1} \cdot \bar{F}_1) \\
	&=F'_2 \cdot \bar{F}_1&&=(I_{m_2} \cdot \bar{F}_2) \cdot \bar{F}_1
	=(\dot{F}_2 \cdot I_{n_2}) \cdot \bar{F}_1 \\
	&=F'_2 \cdot F'_1&&=(\dot{F}_2 \cdot I_{n_2}) \cdot (I_{m_1} \cdot \bar{F}_1) \\
	&&&=(I_{m_2} \cdot \bar{F}_2) \cdot (I_{m_1} \cdot \bar{F}_1) \\
	&&&=(I_{m_2} \cdot \bar{F}_2) \cdot (\dot{F}_1 \cdot I_{n_1}) \\
	&&&=(\dot{F}_2 \cdot I_{n_2}) \cdot (\dot{F}_1 \cdot I_{n_1}) \; .\\
\end{alignat*}
All bracketings are assumed to be feasible due to availability of
sufficient tape memory for storing the DAGs of all elemental functions 
simultaneously in homogeneous adjoint mode $F'=(I_{m_2} \cdot \bar{F}_2) \cdot \bar{F}_1.$ This, rather strong, assumption will be relaxed in Section~\ref{sec:LMGDJCPB}.

The following theorem presents a dynamic programming algorithm for solving
{\sc Matrix-Free Dense Jacobian Chain Product Bracketing} deterministically 
with a computational cost of $\mathcal{O}(q^3).$

\begin{theorem} \label{the:GDJCPB}
Equation~(\ref{eqn:dp}) solves the {\sc Matrix-Free Dense Jacobian Chain Product Bracketing} problem.
\end{theorem}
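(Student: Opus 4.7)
My plan is to prove correctness of the recurrence by induction on the length $j-i+1$ of the subchain, using the standard dynamic programming paradigm of establishing (i) a base case, (ii) an optimal substructure property, and (iii) completeness of the enumeration of top-level combining strategies.

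\textbf{Base case.} For $j=i$, I need to produce the matrix $F'_i \in \R^{m_i \times n_i}$ from the DAG $G_i$ alone. The only two primitives available for this are vector tangent mode seeded with $I_{n_i}$, costing $n_i \cdot |E_i|$, and vector adjoint mode seeded with $I_{m_i}$, costing $m_i \cdot |E_i|$. Since any preaccumulation strategy for a single $F'_i$ reduces to one of these seed choices, the optimum is $|E_i|\cdot \min\{n_i,m_i\}$, matching the base case of Equation~(\ref{eqn:dp}).

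\textbf{Inductive step and optimal substructure.} For $j>i$, I will consider any algorithm that materializes $P_{j,i} \ass F'_j \cdot \ldots \cdot F'_i$ as a dense matrix. I claim that such an algorithm can always be viewed as selecting a split index $k \in \{i,\ldots,j-1\}$ corresponding to the last multiplicative combination step, whose left factor represents $P_{j,k+1}$ and whose right factor represents $P_{k,i}$, and then performing one of exactly three final combining operations: (1) dense matrix–matrix multiplication after materializing both $P_{j,k+1}$ and $P_{k,i}$, at cost $m_j m_k n_i$; (2) vector-adjoint sweep of $P_{j,k+1}$ (materialized, with $m_j$ rows) backward through the DAGs $G_k,\ldots,G_i$, at cost $m_j \sum_{\nu=i}^{k}|E_\nu|$; or (3) vector-tangent sweep of $P_{k,i}$ (materialized, with $n_i$ columns) forward through the DAGs $G_{k+1},\ldots,G_j$, at cost $n_i \sum_{\nu=k+1}^{j}|E_\nu|$. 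A fourth option of propagating through both sides simultaneously without materializing anything is not actually independent: applying a vector adjoint through $G_i,\ldots,G_j$ seeded with $I_{m_j}$ is already captured by case (2) at $k=j-1$ with adjoint-mode preaccumulation of $F'_j$ selected in the base case (and symmetrically for tangents). Once $k$ and the combining strategy are fixed, the costs of producing the two sub-Jacobians enter additively, so by the inductive hypothesis these sub-costs are minimized by $\fma_{j,k+1}$ and $\fma_{k,i}$ respectively, and no cheaper overall algorithm exists for that split/strategy.

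\textbf{Conclusion and main obstacle.} Taking the minimum over all $k$ and over the three strategies then yields exactly the recurrence in Equation~(\ref{eqn:dp}), and solving it bottom-up over increasing subchain length gives the optimum for the full chain $\fma_{q,1}$. Since the tabulation stores $O(q^2)$ entries and each takes $O(q)$ time (the sums $\sum_\nu |E_\nu|$ can be precomputed as prefix sums), the total work is $\mathcal{O}(q^3)$. The main obstacle I expect in making this rigorous is the completeness argument in the inductive step: one must carefully argue that no feasible sequence of tangent/adjoint/matrix-product invocations leading to $P_{j,i}$ escapes being describable as one of the three combining strategies at some split $k$, including the degenerate sweeps in which no intermediate Jacobian is materialized, which the recurrence recovers through extremal choices of $k$ together with the minimum in the base case.
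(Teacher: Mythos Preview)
Your proposal is correct and follows essentially the same dynamic programming argument as the paper: the same base case $|E_j|\cdot\min\{n_j,m_j\}$, the same three combining strategies (b)--(d) at each split $k$, and the same optimal-substructure contradiction. The paper differs mainly in presentation: it treats $l=j-i\in\{0,1\}$ as an extended base case and exhaustively enumerates all eight bracketings of a length-two chain, verifying that five are covered directly by the recurrence and the remaining three (e.g.\ preaccumulating both factors in tangent mode before multiplying) are always dominated by one of the homogeneous sweeps. You instead fold $l=1$ into the general inductive step and handle the homogeneous sweeps via your observation that they arise from extremal $k$ together with the $\min$ in the base case---which is cleaner, though the paper's explicit $l=1$ enumeration makes the completeness of the three-way case split more tangible at the cost of length.
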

\begin{proof}
	We enumerate the four different options from Equation~(\ref{eqn:dp2})
	as
	\begin{itemize}
		\item[(a)] $|E_j| \cdot \min\{n_j,m_j\},$
		\item[(b)] $\min_{i \leq  k < j} \fma_{j,k+1}+\fma_{k,i} + m_j \cdot m_k \cdot n_i,$
		\item[(c)] $\min_{i\leq   k < j} \fma_{j,k+1} + m_j \cdot \sum_{\nu=i}^k |E_\nu|,$ 
		\item[(d)] $\min_{i \leq k <j} \fma_{k,i} + n_i \cdot \sum_{\nu=k+1}^j |E_\nu|.$
			 \end{itemize}
The proof proceeds by induction over $l=j-i.$
\paragraph{Case $0 \leq l \leq 1.$}
All elemental Jacobians $F'_j=F'_{j,j}$
need to be computed in either tangent or adjoint modes at
computational costs of
$n_j \cdot |E_j|$ or
$m_j \cdot |E_j|.$
The respective minima
are tabulated. Special structure of the underlying DAGs $G_i=(V_i,E_i)$ such
as bipartiteness is not exploited. It could result in lower values for
$\fma_{i,i},$ e.g, zero in case of bipartiteness.

The search space for
the product of two dense Jacobians $F'_{i+1} \cdot F'_i$ for given tangents
and adjoints
of $F_{i+1}$ and $F_i$
consists of the following configurations:
\begin{enumerate}
\item $\dot{F}_{i+1} \cdot (\dot{F}_i \cdot I_{n_i}):$ Homogeneous tangent mode
	yields a computational cost of
$$
	\fma_{i+1,i}=n_{i} \cdot |E_{i}|+ n_i \cdot |E_{i+1}| \; .
$$
Equivalently, this scenario can be interpreted as preaccumulation of $F'_i$ in tangent mode followed by evaluation of
$\dot{F}_{i+1} \cdot F'_i.$
This case is covered by Equation~\ref{eqn:dp2}~(a) and (d)
with $n_i \leq m_i.$
\item $\dot{F}_{i+1} \cdot (I_{m_i} \cdot \bar{F}_i):$ Preaccumulation of
$F'_i$ in adjoint mode followed by evaluation of
$\dot{F}_{i+1} \cdot F'_i$ yields a computational cost of
$$
	\fma_{i+1,i}=m_{i} \cdot |E_{i}|+ n_i \cdot |E_{i+1}| \; .
$$
%
% TODO: shouldn't it be n_{i+1} \cdot |E_{i+1}| in the equation above?
%
This case is covered by Equation~\ref{eqn:dp2}~(a) and (d) with
$n_i \geq m_i.$
\item $(I_{m_{i+1}} \cdot \bar{F}_{i+1}) \cdot \bar{F}_i:$
Homogeneous adjoint mode yields a computational cost of
$$
	\fma_{i+1,i}=m_{i+1} \cdot |E_{i+1}|+ m_{i+1} \cdot |E_i| \; .
$$
Equivalently, the preaccumulation of $F'_{i+1}$ in adjoint mode is followed by evaluation of
$F'_{i+1} \cdot \bar{F}_i.$
This case is covered by Equation~\ref{eqn:dp2}~(a) and (c) with
$n_{i+1} \geq m_{i+1}.$
\item $(\dot{F}_{i+1} \cdot I_{n_{i+1}}) \cdot \bar{F}_i:$
Preaccumulation of
		$F'_{i+1}$ in tangent mode followed by evaluation of
$F'_{i+1} \cdot \bar{F}_i$ yields a computational cost of
$$
	\fma_{i+1,i}=n_{i+1} \cdot |E_{i+1}|+ m_{i+1} \cdot |E_i| \; .
$$
%
% TODO: shouldn't it be m_{i} \cdot |E_i| in the equation above?
%
This case is covered by Equation~\ref{eqn:dp2}~(a) and (c) with
$n_{i+1} \leq m_{i+1}.$
\item $(\dot{F}_{i+1} \cdot I_{n_{i+1}}) \cdot (I_{m_i} \cdot \bar{F}_i):$
Preaccumulation of $F'_i$ in adjoint mode followed by
		preaccumulation of $F'_{i+1}$ in tangent mode and
		evaluation of the dense matrix product
$F'_{i+1} \cdot F'_i$ yields a variant of homogeneous preaccumulation with
a computational cost of
$$
	\fma_{i+1,i}=m_{i} \cdot |E_{i}|+n_{i+1} \cdot |E_{i+1}|+m_{i+1} \cdot n_{i+1} \cdot n_i \; .
$$
This case is covered by Equation~\ref{eqn:dp2}~(a) and (b) with
$n_i \geq m_i$ and $n_{i+1} \leq m_{i+1}.$
\end{enumerate}
The remaining three homogeneous preaccumulation options cannot improve the
optimum.
\begin{enumerate}
\item $(\dot{F}_{i+1} \cdot I_{n_{i+1}}) \cdot (\dot{F}_i \cdot I_{n_i}):$
Preaccumulation of both $F'_i$ and $F'_{i+1}$ in tangent mode and
		evaluation of the dense matrix product
$F'_{i+1} \cdot F'_i$ yields a computational cost of
$$
	\fma_{i+1,i}= n_{i} \cdot |E_{i}| +n_{i+1} \cdot |E_{i+1}|+m_{i+1} \cdot n_{i+1} \cdot n_i \; .
$$
It follows that $n_i \leq m_i$ and $n_{i+1} \leq m_{i+1}$ as the
computational cost would otherwise be reduced by preaccumulation
of either $F'_i$ or $F'_{i+1}$ (or both) in adjoint mode.
Superiority of homogeneous
tangent mode follows immediately from
$n_i \leq m_i=n_{i+1} \leq m_{i+1}$ implying
		\begin{align*}
			n_{i} &\cdot |E_{i}|+ n_i \cdot |E_{i+1}| \\ &\leq
			n_{i} \cdot |E_{i}|+ n_{i+1} \cdot |E_{i+1}| \\&<
	n_{i} \cdot |E_{i}| +n_{i+1} \cdot |E_{i+1}|+m_{i+1} \cdot n_{i+1} \cdot n_i \; .
		\end{align*}
\item $(I_{m_{i+1}} \cdot \bar{F}_{i+1}) \cdot (I_{m_i} \cdot \bar{F}_i):$
Preaccumulation of both $F'_i$ and $F'_{i+1}$ in adjoint mode and
evaluation of the dense matrix product
$F'_{i+1} \cdot F'_i$ yields a computational cost of
$$
	\fma_{i+1,i}=m_{i} \cdot |E_{i}|+m_{i+1} \cdot |E_{i+1}|+m_{i+1} \cdot n_{i+1} \cdot n_i \; .
$$
It follows that $n_i \geq m_i$ and $n_{i+1} \geq m_{i+1}$ as the
computational cost would otherwise be reduced by preaccumulation
of either $F'_i$ or $F'_{i+1}$ (or both) in tangent mode.
Superiority of homogeneous adjoint mode follows immediately from
$n_i \geq m_i=n_{i+1} \geq m_{i+1}$ implying
		\begin{align*}
			m_{i+1} &\cdot |E_{i+1}|+ m_{i+1} \cdot |E_i| \\ &\leq
m_{i} \cdot |E_{i}|+m_{i+1} \cdot |E_{i+1}| \\
			&< m_{i} \cdot |E_{i}|+m_{i+1} \cdot |E_{i+1}|+m_{i+1} \cdot n_{i+1} \cdot n_i \; .
		\end{align*}
\item $(I_{m_{i+1}} \cdot \bar{F}_{i+1}) \cdot (\dot{F}_i \cdot I_{n_i}):$
Preaccumulation of $F'_i$ in tangent mode followed by
		preaccumulation of $F'_{i+1}$ in adjoint mode and
		evaluation of the dense matrix product
$F'_{i+1} \cdot F'_i$ yields a computational cost of
$$
	\fma_{i+1,i}=n_{i} \cdot |E_{i}|+m_{i+1} \cdot |E_{i+1}|+m_{i+1} \cdot n_{i+1} \cdot n_i \; .
$$
It follows that $n_i \leq m_i$ and $n_{i+1} \geq m_{i+1}$ as the
computational cost would otherwise be reduced by preaccumulation
of either $F'_i$ in adjoint mode or by preaccumulation of $F'_{i+1}$ in
tangent mode (or both).
This scenario turns out to be inferior to either homogeneous tangent or
adjoint modes. For $n_i \leq m_{i+1}$
\begin{align*}
	n_{i} &\cdot |E_{i}|+ n_i \cdot |E_{i+1}| \\ &\leq
		n_{i} \cdot |E_{i}|+ m_{i+1} \cdot |E_{i+1}| \\ &<
	n_{i} \cdot |E_{i}| +m_{i+1} \cdot |E_{i+1}|+m_{i+1} \cdot n_{i+1} \cdot n_i
\end{align*}
while for
$n_i \geq m_{i+1}$
		\begin{align*}
			m_{i+1} &\cdot |E_{i+1}|+ m_{i+1} \cdot |E_i| \\ &\leq
			m_{i+1} \cdot |E_{i+1}|+ n_i \cdot |E_i| \\ &<
	n_{i} \cdot |E_{i}| +m_{i+1} \cdot |E_{i+1}|+m_{i+1} \cdot n_{i+1} \cdot n_i \; .
\end{align*}
\end{enumerate}
\paragraph{Case $1 \leq l\Rightarrow l+1$.}
{\sc Matrix-Free Dense Jacobian Chain Product Bracketing}
inherits the {\em overlapping subproblems} property from
{\sc Jacobian Chain Product Bracketing}.
It adds two choices
at each split location $i \leq k < j.$
Splitting at position $k$ implies the evaluation of
$F'_{j,i}$ as $F'_{j,k+1} \cdot F'_{k,i}.$
In addition to both
$F'_{j,k+1}$ and $F'_{k,i}$ being available
there are the following two options: $F'_{k,i}$ is available and
it enters the tangent $\dot{F}_{j,k+1} \cdot F'_{k,i}$ as argument;
$F'_{j,k+1}$ is available and it enters the adjoint $F'_{j,k+1} \cdot \bar{F}_{k,i}$ as argument.
All three options yield $F'_{j,i}$ and they correspond to
Equation~\ref{eqn:dp2}~(b)--(d).

The {\em optimal substructure} property remains to be shown. It implies
feasibility of tabulating solutions to all subproblems
for constant-time lookup during the exhaustive search of the $3 \cdot l$
possible scenarios corresponding to the $l$ split locations.

Let the {\em optimal substructure} property not hold for an optimal
$\fma_{l+1,1}$ obtained at split location $1 \leq k < l+1.$ Three cases need
to be distinguished that correspond to Equation~\ref{eqn:dp2}~(b)--(d).
\begin{itemize}
\item[(b)] $\fma_{j,k+1}+\fma_{k,i} + m_j \cdot m_k \cdot n_i:$
The optimal substructure property holds for the preaccumulation of both
$F'_{j,k+1}$ and $F'_{k,i}$ given as chains of length $\leq l.$
The cost of the dense matrix product $F'_{j,k+1} \cdot F'_{k,i}$ is independent
of the respective preaccumulation methods.
For the {\em optimal substructure} property to not hold either the
preaccumulation $F'_{j,k+1}$ or the preaccumulation of $F'_{k,i}$ must be
suboptimal. However, replacement of this suboptimal
preaccumulation method with the tabulated optimum would reduce the overall
cost and hence yield the desired contradiction.
\item[(c)] $\fma_{j,k+1} + m_j \cdot \sum_{\nu=i}^k |E_\nu|:$
The optimal substructure property holds for the preaccumulation of $F'_{j,k+1}.$
The cost of the adjoint $F'_{j,k+1} \cdot \bar{F}_{k,i}$ is independent
of the preaccumulation method.
The replacement of a suboptimal
preaccumulation of $F'_{j,k+1}$ with the tabulated optimum would reduce the
overall cost and hence yield the desired contradiction.
\item[(d)] $\fma_{k,i} + n_i \cdot \sum_{\nu=k+1}^j |E_\nu|:$
The optimal substructure property holds for the preaccumulation of $F'_{k,i}.$
The cost of the tangent $\dot{F}_{j,k+1} \cdot F'_{k,i}$ is independent
of the preaccumulation method.
The replacement of a suboptimal
preaccumulation of $F'_{k,i}$ with the tabulated optimum would reduce the
overall cost and hence yield the desired contradiction.
\end{itemize}
\end{proof}
\paragraph{Example.} We present examples for a
matrix-free dense Jacobian chain product of length two. 
Five configurations
are considered with their solutions corresponding to the five instances of
the search space investigated in the proof of Theorem~\ref{the:GDJCPB}. Optimal
values are highlighted in bold.
\begin{figure*}
\begin{equation} \label{eqn:dp2}
\fma_{j,i}=
\begin{cases}
	\centering |E_j| \cdot 
\begin{cases}
n_j & |E_j| > \overline{M} \\
\min\{n_j,m_j\} & \text{otherwise}
\end{cases}
& j=i \\ \\
	\min_{i \leq k < j} \left \{ \min \left \{
\begin{split}
&\fma_{j,k+1}+\fma_{k,i} + m_j \cdot m_k \cdot n_i, \\
	&\fma_{j,k+1} +
m_j \cdot \sum_{\nu=i}^k |E_\nu| \quad \text{if}~\sum_{\nu=i}^k |E_\nu| \leq \overline{M}, \\
	&\fma_{k,i} + n_i \cdot \sum_{\nu=k+1}^j |E_\nu|
\end{split}
	\right \} \right \} & j>i \; .
\end{cases}
\end{equation}
\end{figure*}
\begin{enumerate}
\item $n_1=2,$ $m_1=n_2=4,$ $m_2=8$, $|E_1|=|E_2|=100:$ 
\begin{align*}
&\fma\left (\dot{F}_2 \cdot (\dot{F}_1  \cdot I_{n_1})\right )={\bf 400} \\
	&\fma\left (\dot{F}_2 \cdot (I_{m_1} \cdot \bar{F}_1) \right )=600 \\
	&\fma\left ((I_{m_2} \cdot \bar{F}_2) \cdot \bar{F}_1 \right )=1600 \displaybreak[0] \\
	&\fma\left ((\dot{F}_2 \cdot I_{n_2}) \cdot \bar{F}_1 \right )=1200 \displaybreak[0] \\
	&\fma\left ((\dot{F}_2 \cdot I_{n_2}) \cdot (I_{m_1} \cdot \bar{F}_1) \right )=864 \displaybreak[0] \\
	&\fma\left ((I_{m_2} \cdot \bar{F}_2) \cdot (I_{m_1} \cdot \bar{F}_1) \right )=1264 \displaybreak[0] \\
	&\fma\left ((I_{m_2} \cdot \bar{F}_2) \cdot (\dot{F}_1 \cdot I_{n_1}) \right )=1064 \displaybreak[0] \\
	&\fma\left ((\dot{F}_2 \cdot I_{n_2}) \cdot (\dot{F}_1 \cdot I_{n_1}) \right )= 664.
\end{align*}
\item $n_1=4,$ $m_1=n_2=2,$ $m_2=32$, $|E_1|=|E_2|=100:$
\begin{align*}
&\fma\left (\dot{F}_2 \cdot (\dot{F}_1  \cdot I_{n_1})\right )=800 \\
&\fma\left (\dot{F}_2 \cdot (I_{m_1} \cdot \bar{F}_1) \right )={\bf 600} \\
&\fma\left ((I_{m_2} \cdot \bar{F}_2) \cdot \bar{F}_1 \right )=6400 \\
&\fma\left ((\dot{F}_2 \cdot I_{n_2}) \cdot \bar{F}_1 \right )=3400 \\
&\fma\left ((\dot{F}_2 \cdot I_{n_2}) \cdot (I_{m_1} \cdot \bar{F}_1) \right )=656 \\
&\fma\left ((I_{m_2} \cdot \bar{F}_2) \cdot (I_{m_1} \cdot \bar{F}_1) \right )=3656\\
&\fma\left ((I_{m_2} \cdot \bar{F}_2) \cdot (\dot{F}_1 \cdot I_{n_1}) \right )=3856 \\
&\fma\left ((\dot{F}_2 \cdot I_{n_2}) \cdot (\dot{F}_1 \cdot I_{n_1}) \right )= 856.
\end{align*}
\item $n_1=8,$ $m_1=n_2=4,$ $m_2=2$, $|E_1|=|E_2|=100:$
\begin{align*}
&\fma\left (\dot{F}_2 \cdot (\dot{F}_1  \cdot I_{n_1})\right )=1600 \displaybreak[0] \\
&\fma\left (\dot{F}_2 \cdot (I_{m_1} \cdot \bar{F}_1) \right )=1200 \displaybreak[0] \\
&\fma\left ((I_{m_2} \cdot \bar{F}_2) \cdot \bar{F}_1 \right )={\bf 400} \displaybreak[0] \\
&\fma\left ((\dot{F}_2 \cdot I_{n_2}) \cdot \bar{F}_1 \right )=600 \displaybreak[0] \\
&\fma\left ((\dot{F}_2 \cdot I_{n_2}) \cdot (I_{m_1} \cdot \bar{F}_1) \right )=864 \displaybreak[0] \\
&\fma\left ((I_{m_2} \cdot \bar{F}_2) \cdot (I_{m_1} \cdot \bar{F}_1) \right )=664 \displaybreak[0] \\
&\fma\left ((I_{m_2} \cdot \bar{F}_2) \cdot (\dot{F}_1 \cdot I_{n_1}) \right )=1064 \displaybreak[0] \\
&\fma\left ((\dot{F}_2 \cdot I_{n_2}) \cdot (\dot{F}_1 \cdot I_{n_1}) \right )= 1264.
\end{align*}
\item $n_1=32,$ $m_1=n_2=2,$ $m_2=4$, $|E_1|=|E_2|=100:$
\begin{align*}
&\fma\left (\dot{F}_2 \cdot (\dot{F}_1  \cdot I_{n_1})\right )=6400 \\
&\fma\left (\dot{F}_2 \cdot (I_{m_1} \cdot \bar{F}_1) \right )=3400 \\
&\fma\left ((I_{m_2} \cdot \bar{F}_2) \cdot \bar{F}_1 \right )=800 \\
&\fma\left ((\dot{F}_2 \cdot I_{n_2}) \cdot \bar{F}_1 \right )={\bf 600} \\
&\fma\left ((\dot{F}_2 \cdot I_{n_2}) \cdot (I_{m_1} \cdot \bar{F}_1) \right )=656 \\
&\fma\left ((I_{m_2} \cdot \bar{F}_2) \cdot (I_{m_1} \cdot \bar{F}_1) \right )=856 \\
&\fma\left ((I_{m_2} \cdot \bar{F}_2) \cdot (\dot{F}_1 \cdot I_{n_1}) \right )=3856 \\
&\fma\left ((\dot{F}_2 \cdot I_{n_2}) \cdot (\dot{F}_1 \cdot I_{n_1}) \right )= 3656.
\end{align*}
\item $n_1=4,$ $m_1=n_2=2,$ $m_2=4$, $|E_1|=|E_2|=100:$
\begin{align*}
&\fma\left (\dot{F}_2 \cdot (\dot{F}_1  \cdot I_{n_1})\right )=800 \\
&\fma\left (\dot{F}_2 \cdot (I_{m_1} \cdot \bar{F}_1) \right )=600 \\
&\fma\left ((I_{m_2} \cdot \bar{F}_2) \cdot \bar{F}_1 \right )=800 \\
&\fma\left ((\dot{F}_2 \cdot I_{n_2}) \cdot \bar{F}_1 \right )=600 \\
&\fma\left ((\dot{F}_2 \cdot I_{n_2}) \cdot (I_{m_1} \cdot \bar{F}_1) \right )={\bf 432} \\
&\fma\left ((I_{m_2} \cdot \bar{F}_2) \cdot (I_{m_1} \cdot \bar{F}_1) \right )=632 \\
&\fma\left ((I_{m_2} \cdot \bar{F}_2) \cdot (\dot{F}_1 \cdot I_{n_1}) \right )=832 \\
&\fma\left ((\dot{F}_2 \cdot I_{n_2}) \cdot (\dot{F}_1 \cdot I_{n_1}) \right )= 632.
\end{align*}
\end{enumerate}

\section{\sc Limited-Memory Matrix-Free Dense Jaco- bian Chain Product Bracketing} \label{sec:LMGDJCPB}

Real-world scenarios have to deal with limited tape memory. 
Homogeneous adjoint mode may not be feasible, even if
it is restricted to subchains of Equation~(\ref{eqn:jcp}). 

As a special case of {\sc Matrix-Free Dense Jacobian Chain Product Bracketing} 
we formulate the {\sc Limited-Memory Matrix-Free Dense Jacobian Chain Product
Bracketing} 
problem, a solution for which
can still be computed efficiently.
As before, all $F'_i$ are regarded as dense. 
A cost-optimal feasible sequence of applications of tangents and adjoints is 
sought.
Feasibility is defined as the size of tape memory not 
exceeding a given upper bound $\overline{M}.$ Argument checkpoints $\Z_{i-1}$ 
are assumed to be available for all $F_i$ allowing for individual DAGs to be
recorded as necessary. A single additional evaluation of $F$ is required.
Moreover, the argument checkpoints are assumed to not count towards the 
tape memory, which turns out to be a reasonable simplification
if their sizes are negligible relative to the sizes of 
the corresponding DAGs.

Formally, the {\sc Limited-Memory Matrix-Free Dense Jacobian Chain Product 
Bracketing} problem is stated as follows:

Let tangents, evaluating $\dot{F}_i \cdot \dot{Z}_i,$ and adjoint, evaluating 
$\bar{Z}_{i+1} \cdot \bar{F}_i,$ be given for all elemental functions
$F_i,$ $i=1,\ldots,q,$ in Equation~(\ref{eqn:jcp}). Treat
all elemental Jacobians as dense.
For a given positive integer $K,$ is there a feasible sequence of evaluations of
the tangents and/or adjoints, such that the number of \fma\ operations
required for the accumulation of the Jacobian $F'$ does not exceed K?
Feasibility is defined as the size of tape memory required by the longest 
homogeneous adjoint subchain of a candidate solution not 
exceeding a given upper bound $\overline{M} \geq 0.$ 

\begin{theorem} \label{the:LMGDJCPB}
Equation~(\ref{eqn:dp2}) solves the {\sc Limited-Memory Matrix-Free Dense Jacobian Chain Product Bracketing} problem.
\end{theorem}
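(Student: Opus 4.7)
The plan is to build directly on the proof of Theorem~\ref{the:GDJCPB}, adapting only those steps where the tape memory bound $\overline{M}$ can change the feasibility of a candidate evaluation scheme. The overall structure---induction over the chain length $l=j-i$, enumeration of the split options corresponding to (b)--(d), and appeals to the overlapping subproblems and optimal substructure properties---carries over intact; all that needs to be reverified is that infeasible schemes are ruled out of the minimization and that the pool of feasible schemes still contains an optimum.

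For the base case $j=i$, the argument from Theorem~\ref{the:GDJCPB} establishes a cost of $|E_j| \cdot \min\{n_j,m_j\}$ in the unconstrained setting. I would observe that adjoint mode on $F_j$ requires $|E_j|$ units of tape memory to store $G_j$; whenever $|E_j|>\overline{M}$ this scheme is infeasible, leaving tangent mode at cost $|E_j|\cdot n_j$ as the only viable choice, which is exactly the gated expression in the first case of Equation~(\ref{eqn:dp2}).

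For the inductive step, the three recursive options correspond to different modes of differentiation applied to the subchains $[k{+}1,j]$ and $[i,k]$. Option (b) preaccumulates both subchain Jacobians and multiplies them densely; feasibility of the two preaccumulations is enforced inside $\fma_{j,k+1}$ and $\fma_{k,i}$, so no extra gate is required. Option (d) preaccumulates $F'_{k,i}$ and then propagates $n_i$ tangents through $[k{+}1,j]$; since tangent mode does not persist any DAG in memory, it is always feasible once the recursive subproblem is. Option (c) seeds a single adjoint sweep through the entire subchain $[i,k]$ with the preaccumulated $F'_{j,k+1}$, and this sweep requires $G_i,\ldots,G_k$ to be held in tape memory simultaneously. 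That subchain must therefore satisfy $\sum_{\nu=i}^k |E_\nu|\leq\overline{M}$, which is precisely the new condition guarding option (c) in Equation~(\ref{eqn:dp2}).

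The step that demands care is showing that optimal substructure survives the memory bound. I would argue by contradiction exactly as in Theorem~\ref{the:GDJCPB}: an optimal feasible bracketing of $F'_{j,i}$ that splits at $k$ must, for whichever of the options (b)--(d) it uses, rely on feasible evaluations of its two subchains whose individual tape footprints are each bounded by $\overline{M}$. The crucial observation is that tape memory allocated inside a preaccumulated subchain is released before the surrounding multiplication takes place, so feasibility of each subproblem is decoupled from what happens outside it. Consequently the tabulated optimum for each subproblem is also feasible in the enclosing context, and substituting it in place of a suboptimal subcomputation strictly lowers the overall cost without violating $\overline{M}$, yielding the desired contradiction. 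Together with the explicit gate on option (c) and the revised base case, this shows that Equation~(\ref{eqn:dp2}) returns the \fma-optimal feasible bracketing.
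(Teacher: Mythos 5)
Your proposal is correct and takes essentially the same route as the paper: the paper's own proof is a four-sentence remark that infeasible adjoints are simply discarded from the search space of Theorem~\ref{the:GDJCPB} while the \emph{overlapping subproblems} and \emph{optimal substructure} properties are preserved. Your write-up merely supplies the details the paper leaves implicit---in particular, why only the base case and option (c) need an explicit memory gate, and why the release of tape memory between preaccumulated subchains keeps feasibility of subproblems decoupled from the enclosing context so that optimal substructure survives.
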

\begin{proof}
A subspace of the search space of 
{\sc Matrix-Free Dense Jacobian Chain Product Bracketing} needs to be explored.
Infeasible adjoints are simply discarded. Both the 
{\em overlapping subproblems} and {\em optimal substructure} properties of dynamic programming are preserved. 
Optimal feasible solutions to all subproblems can be tabulated for 
lookup in constant time.
\end{proof}

\paragraph{Example.} To illustrate the impact of limited memory,
let $F=F_3 \circ F_2 \circ F_1 : \R^8 \rightarrow \R,$ such that
\begin{align*}
F_1 &: \R^8 \rightarrow \R^4,~|E_1|=32  \\
F_2 &: \R^4 \rightarrow \R^2,~|E_2|=16 \\
F_3 &: \R^2 \rightarrow \R,~|E_3|=8 \; .
\end{align*}
Homogeneous adjoint mode outperforms all other Jacobian accumulations
in unlimited memory, actually using 
$M=|E_1|+|E_2|+|E_3|=56.$
The following table illustrates the impact of a decreasing upper bound
$\overline{M}$ for the tape memory on both structure and
computational cost of corresponding feasible Jacobian accumulations given
Equation~(\ref{eqn:dp2}).
\begin{center}
\begin{tabular}{|c|c|c|c|}
\hline
&&&\vspace{-3mm} \\
$\overline{M}$ & $F'=$ & $\fma$ & $M$ \\
&&&\vspace{-3mm} \\
\hline 
&&&\vspace{-3mm} \\
56 & $1 \cdot \bar{F}_3 \cdot \bar{F}_2 \cdot \bar{F}_1$ & 56 & 56 \\
55 & $(1 \cdot \bar{F}_3) \cdot (I_2 \cdot \bar{F}_2 \cdot \bar{F}_1)$ & 120 & 48 \\
47 & $(1 \cdot \bar{F}_3 \cdot \bar{F}_2) \cdot  (I_4 \cdot \bar{F}_1)$ & 184 & 32 \\
31 & $(1 \cdot \bar{F}_3 \cdot \bar{F}_2) \cdot  (\dot{F}_1 \cdot I_8)$ & 312 & 24 \\
23 & $(1 \cdot \bar{F}_3) \cdot (I_2 \cdot \bar{F}_2) \cdot  (\dot{F}_1 \cdot I_8)$ & 336 & 16 \\
15 & $(1 \cdot \bar{F}_3) \cdot (\dot{F}_2 \cdot I_4) \cdot  (\dot{F}_1 \cdot I_8)$ & 368 & 8 \\
7 & $(\dot{F}_3 \cdot I_2) \cdot (\dot{F}_2 \cdot I_4) \cdot  (\dot{F}_1 \cdot I_8)$ & 376 & 0 \\
\hline
\end{tabular}
\end{center}
$\overline{M}$ is set to undercut the tape memory requirement $M$
of the preceding
scenario by one unit until the latter vanishes
identically. The resulting feasible Jacobian accumulations 
relax adjoint mode gradually by introducing local preaccumulation, first
in adjoint (second and third rows) and later in (combination with) tangent 
mode (from fourth row).
The computational cost grows accordingly.

\section{Implementation and Results} \label{sec:impl}

Our reference implementation can be downloaded from
\begin{center}
	\tt
	www.github.com/un110076/ADMission/MFJC
\end{center}
together with all sample problems referred to in this paper.
All results can thus be reproduced.
Two executables are provided:
\verb!generate.exe! generates problem
instances randomly for a given length $q$ of the chain and upper
bound on the number of rows and columns of the individual
factors. Two additional command line parameters specify lower and upper bounds
for the number of edges in the individual DAGs.
The output can be redirected into a text file which serves as input
to \verb!solve.exe!.
The latter computes one optimal feasible solution to the given
problem instance for given $\overline{M}$. This solution is compared with the 
costs of the
homogeneous tangent, adjoint and preaccumulation methods. The latter
represents a solution of the resulting
{\sc Dense Jacobian Chain Product Bracketing} problem.

\begin{table*}
\centering
\begin{tabular}{|c|c|c|c|c|c|}
\hline
\verb!len! & Tangent & Adjoint & Preaccumulation & Optimum & \\
\hline
	10 &  3,708 & 5,562 & \bf 2,618 & \bf 1,344 & 2 \\
	50 & \bf 1,283,868 & 1,355,194 & 1,687,575 & \bf 71,668 & 18\\
	100 &  \bf 3,677,565 & 44,866,293 & 40,880,996 & \bf 1,471,636 & 2 \\
	250 &  \bf 585,023,794 & 1,496,126,424 & 1,196,618,622& \bf 9,600,070 & 61 \\
	500 & 21,306,718,862 & 19,518,742,454 & \bf 18,207,565,409 & \bf 149,147,898 & 122 \\
\hline
\end{tabular}
	\caption{Test Results: Cost in $\fma;$ Superiority of the solutions to {\sc Matrix-Free Dense Jacobian Chain Bracketing} are quantified as the rounded ratios of the highlighted entries in each row; see last column.} \label{tab:res}
\end{table*}

\paragraph{Example.} Run \verb!generate.exe 3 3 0 40! to get, for example,
the problem instance
$
F'=F'_3 \cdot F'_2 \cdot F'_1,
$
where \\
\\
$F'_1 \in \R^{3 \times 3} \rightarrow G_1=(V_1,E_1):~|E_1|=29 $\\
$F'_2 \in \R^{1 \times 3} \rightarrow G_2=(V_2,E_2):~|E_2|=14 $\\
$F'_3 \in \R^{2 \times 1} \rightarrow G_3=(V_3,E_3):~|E_3|=7 \; .$ \\
\\
Let this problem description be stored in the text file \verb!problem.txt!.
Note that the conservative $\overline{M}=50$ yields an instance of the 
{\sc Matrix-Free Dense Jacobian Chain Bracketing} problem as homogeneous
adjoint mode would result in a tape memory requirement of $M=50.$
Run 
\verb!solve.exe problem.txt 50!
to get the following output:

\begin{lstlisting}[basicstyle=\footnotesize]

G_{1}=[ 3 3 29 ]
G_{2}=[ 1 3 14 ]
G_{3}=[ 2 1 7 ]

Dynamic Programming Table:
F'_{1,1}: GxM(0); fma=87; M=0;
F'_{2,2}: MxG(0); fma=14; M=14;
F'_{2,1}: MxG(1); fma=43; M=43;
F'_{3,3}: GxM(0); fma=7; M=0;
F'_{3,2}: MxM(2); fma=27; M=14;
F'_{3,1}: MxM(2); fma=56; M=43;

Optimal Cost=56
Memory Requirement=43

Cost of
  homogeneous tangent mode=150
  homogeneous adjoint mode=100
  optimal preaccumulation=108+15=123
\end{lstlisting}
$F'_1$ is optimally accumulated in tangent mode, {\footnotesize \lstinline{GxM(0)}}, at the
expense of $3\cdot 29=87 \fma$ without tape (similarly,
$F'_2$ in adjoint mode, {\footnotesize \lstinline{MxG(0)}}, at $1\cdot 14=14 \fma$ with tape of size 14, and
$F'_3$, again, in tangent mode at $1\cdot 7=7 \fma$ without tape). 
The argument denotes the split position, which turns out
obsolete in case of preaccumulation of elemental Jacobians, denoted 
by {\footnotesize \lstinline{0}}.
The optimal method to compute
$F'_{2,1}$ uses adjoint mode as $F'_2 \cdot \bar{F}_1$ at cost
$14+1 \cdot 29=43 \fma.$
Preaccumulation of $F'_2$ and $F'_3$ followed by the dense matrix product
$F'_3 \cdot F'_2$, denoted by {\footnotesize \lstinline{MxM(2)}} turns out to 
be the optimal method for computing $F'_{3,2}.$
The entire problem instance is evaluated optimally as
$$
F'=(\dot{F}_3 \cdot I_1) \cdot ((I_1 \cdot \bar{F}_2) \cdot \bar{F}_1) \; ,
$$
which corresponds to the fourth, third and sixth entries in the {\footnotesize \lstinline{Dynamic Programming Table}}.
The computational cost becomes equal to $7\cdot 1 + (14+29)\cdot 1 + 2 \cdot 1 \cdot 3=56 \fma.$ The adjoint subchain $(I_1 \cdot \bar{F}_2) \cdot \bar{F}_1$ 
(third table entry) induces
the maximum tape memory requirement of $43=29+14$.

Homogeneous tangent mode
$$
F':=\dot{F}_3 \cdot (\dot{F}_2 \cdot (\dot{F}_1 \cdot I_{n_1}))
$$
yields a cost of $n_1 \cdot \sum_{i=1}^3 |E_i|=3 \cdot (29+14+7)=150 \fma.$
Homogeneous adjoint mode
$$
F':=((I_{m_3} \cdot \bar{F}_3) \cdot \bar{F}_2) \cdot \bar{F}_1
$$
yields a cost of $m_3 \cdot \sum_{i=1}^3 |E_i|=2 \cdot (29+14+7)=100 \fma.$
It becomes infeasible for 
$\overline{M}<50.$
Optimal preaccumulation of $F'_i$ for $i=1,2,3$ takes
$\sum_{i=1}^3 |E_i| \cdot \min(m_i,n_i)=1 \cdot 7 + 1 \cdot 14 + 3 \cdot 29=108 \fma$ followed by optimal bracketing as
$
F'=F'_3 \cdot (F'_2 \cdot F'_1),
$
adding $9+6=15 \fma$ and
yielding a total cost of the optimal homogeneous preaccumulation method of $108+15=123 \fma.$

The dynamic programming solution of the
{\sc Matrix-Free Dense Jacobian Chain Product Bracketing} problem
yields an improvement of nearly $50$ percent over homogeneous adjoint mode.
Less conservative tape memory bounds increase the cost of Jacobian 
accumulation, for example, $\overline{M}=42 \Rightarrow \fma=123,~M=14$ and
	$\overline{M}=13 \Rightarrow \fma=142,~M=0.$
Readers are encouraged to reproduce these results using the given reference 
implementation.

In Table~\ref{tab:res} we present results for random problem instances of
growing sizes. 
The dynamic programming solutions can improve the best homogeneous
method by more than two order of magnitude. Full specifications of all five test
problems can be found in the github repository.
Readers are encouraged to investigate the impact of limited memory on the
respective solutions.

\section{Case Study} \label{sec:cs}

When tackling real-world problems, a partitioning of the differentiable program 
$\Y=F(\X)$ into elemental functions may not be obvious.
Automatic detection of a suitable partitioning of $F$ is highly desirable.
This is equivalent to finding (minimal) vertex cuts in the DAG 
implied by an execution of the primal for some input $\X$. 

A detailed discussion of the
corresponding dynamic program analysis based on operator and function 
overloading as employed by various software tools for AD \cite{Griewank1996AAC,Hogan2014FRM,Sagebaum2019HPD} is omitted due to space restrictions.
It follows closely the principles of activity analysis outlined
for the static case in \cite{Hascoet2005TBR}. 
Vertex cuts in the DAG
represent the set of active variables at the given point in the underlying
simulation. Split positions are placed at selected local minima. 
A balance of the computational costs of the resulting elemental functions
is sought heuristically.

To showcase the applicability of our approach to real-world scenarios, we 
generate the DAG for part of a tunnel flow simulation performed with the 
open-source CFD library OpenFOAM.
A plot of the evolution of the vertex cut size can be seen in Figure~\ref{fig:foam_ve}. The entire simulation amounts to a chain of this pattern. 

\begin{figure}
  \includegraphics[width=.96\columnwidth]{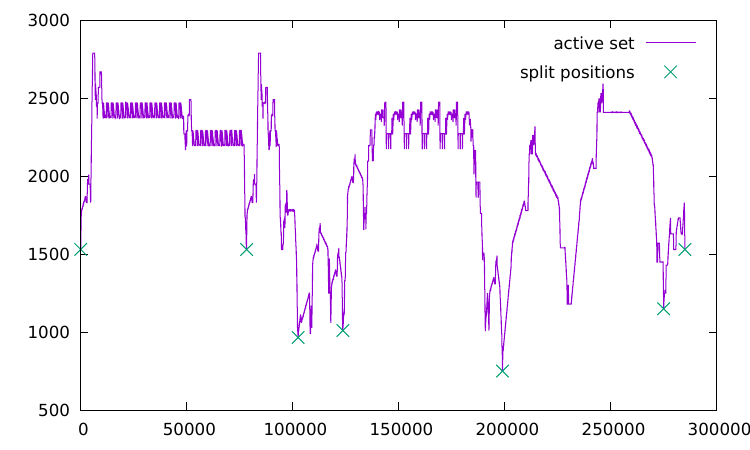}
  \label{fig:foam_ve}
	\caption{Tunnel flow simulation with OpenFOAM: Evolution of the 
	(vertex) cut size of the DAG over 
	potential split positions determined by the structure of target code.}
\end{figure}

Application of \verb!solve.exe! to the given 
partitioning of the DAG for a given conservative memory bound, for example,
$\overline{M}=284888,$ corresponding to the tape memory requirement of homogeneous adjoint mode, yields the following output:
\begin{lstlisting}[basicstyle=\footnotesize]
G_{1}=[ 1531 1531 78172 ]
G_{2}=[ 967 1531 24346 ]
G_{3}=[ 1011 967 21090 ]
G_{4}=[ 751 1011 75280 ]
G_{5}=[ 1151 751 75980 ]
G_{6}=[ 1531 1151 10020 ]

Dynamic Programming Table:
...
F'_{4,4}: MxG(0); fma=56535280; M=75280; 
F'_{4,3}: MxG(3); fma=72373870; M=96370; 
...
F'_{5,3}: GxM(4); fma=145846530; M=96370; 
F'_{5,2}: MxG(2); fma=173868776; M=120716; 
F'_{5,1}: MxG(1); fma=263844748; M=198888; 
...
F'_{6,1}: GxM(5); fma=279185368; M=198888; 

Optimal Cost=279185368
Memory Requirement=198888

Cost of
  homogeneous tangent mode=436163528
  homogeneous adjoint mode=436163528
  optimal preaccumulation=288747224+6690070967
                         =6978818191
\end{lstlisting}
Gaps in the output are marked by $\ldots$
The lines shown in the 
\lstinline[basicstyle=\footnotesize]{Dynamic Programming Table} yield the
six operations performed by the resulting non-trivial bracketing:
\begin{equation*}
  F'_{5,0} = \dot F_5 \cdot ((\dot F_4 \cdot (I_{m_3} \cdot \bar F_3 \cdot \bar F_2))\cdot \bar F_1 \cdot \bar F_0) \, .
\end{equation*}
Readers are encouraged to validate that
less conservative tape memory bounds increase the cost of Jacobian 
accumulation as follows:
$\overline{M}=198887\Rightarrow \fma=370959408,~M=102518$ and
$\overline{M}=102517\Rightarrow \fma=436163528,~M=0.$

\section{Conclusion and Outlook} \label{sec:concl}

This paper generalizes prior work on Jacobian chain products in the
context of AD. Tangents and adjoints of
differentiable subprograms of numerical simulation programs are typically available instead of the corresponding elemental Jacobian matrices. 
Dynamic programming makes
the underlying abstract combinatorial problem formulation computationally
tractable. 
Optimal combination of tangents and adjoints yield
sometimes impressive reductions of the overall operations count. 
Still, there are additional aspects to be considered for further evolution of
the general approach.

\paragraph{Exploitation of Jacobian Sparsity.}

Exploitation of sparsity of the $F_i$ in Equation~(\ref{eqn:jcp}) impacts the
computational cost estimate for their preaccumulation.
Various Jacobian compression
techniques based on coloring of different representations of the sparsity
patterns as graphs have been proposed for general \cite{Gebremedhin2005WCI}
as well as special \cite{Lulfesmann2014Seg} Jacobian sparsity patterns.
In general,
the minimization of
the overall computational cost becomes intractable as a consequence of
intractability of the underlying coloring problems. 

Exploitation of sparsity beyond the level of preaccumulation of the elemental
Jacobians in the context of Matrix-free Jacobian chaining yields further 
challenges to be addressed by future research. Consider, for example, the
simple scenario $F(\X)=F_2(F_1(\X))$ with diagonal $F'_2$ and some irregular
sparsity pattern of $F'_1$ yielding the same sparsity pattern for $F'.$
A single evaluation of the tangent $\dot{F}_2 \cdot \sum_{i=1}^n {\bf e}_i$
yields the optimally compressed $\hat{F}'_2$ as a vector of size $n,$ 
holding the diagonal of $F'_2.$ Let a compressed version $\hat{F}'_1$ of $F'_1$ 
be computed using one out of several powerful heuristics available 
\cite{Gebremedhin2005WCI}. 
Evaluation of $F'_2 \cdot F'_1$ as $\hat{F}'_2 \cdot \hat{F}'_1$ or
$\hat{F}'_2 \cdot \bar{F}'_1$ requires for $\hat{F}'_2$ to recompressed 
according to the sparsity pattern of $F'_1.$ Seamless chaining of compressed
Jacobians is prevented unless all compression is based on the sparsity
pattern of $F,$ which is likely to turn out suboptimal in general. 
The combination of matrix-free Jacobian chaining with the exploitation of
Jacobian sparsity remains the subject of ongoing research.

\paragraph{Exploitation of DAG Structure.}

Further exploitation of data dependence patterns through structural properties
of the concatenation of the elemental DAGs may lead to further decrease of the
computational cost. Vertex, edge, and face elimination techniques have been
proposed to allow for applications of the chain rule beyond Jacobian chain multiplication \cite{Naumann2004Oao}. For example, both bracketings of the sparse 
matrix chain product \\
$$
\footnotesize
        \begin{pmatrix}
        m^2_{0,0} & 0 & 0 \\
         0 & m^2_{1,1} & m^2_{1,2} \\
\end{pmatrix}
\begin{pmatrix}
        m^1_{0,0} & m^1_{0,1} & 0 \\
         0 & m^1_{1,1} & m^1_{1,2} \\
         0 & 0  & m^1_{2,2} \\
\end{pmatrix}
\begin{pmatrix}
        m^0_{0,0} &0  \\
         m^0_{1,0} & 0 \\
         0 & m^0_{2,1} 
\end{pmatrix}
$$ 
$\;$ \\
yield a computational
cost of $9 \fma.$ Full exploitation of distributivity enables computation
of the resulting matrix as \\
$$
\footnotesize
\begin{pmatrix}
        m^2_{0,0} (m^1_{0,0} m^0_{0,0} +m^1_{0,1} m^0_{1,0}) & 0 \\
        m^2_{1,1} m^1_{1,1} m^0_{1,0} & (m^2_{1,1} m^1_{1,2} + m^2_{1,2} m^1_{2,2}) m^0_{2,1}
\end{pmatrix}
$$
$\;$ \\
at the expense of only $8 \fma,$ which is easily translated into a 
vertex
elimination in the underlying DAG \cite{Griewank1991OtC}.
The combination of matrix-free Jacobian chaining with elimination techniques
remains the subject of ongoing research.

\paragraph{From Matrix Chains to Matrix DAGs.}
Let
$$
F=\begin{pmatrix}
	F_2 \circ F_1 \\
	F_3 \circ F_1
\end{pmatrix} : \R^2 \rightarrow \R^5
$$
such that $F_1 : \R^2 \rightarrow \R^4,$
$F_2 : \R^4 \rightarrow \R^1,$
$F_3 : \R^4 \rightarrow \R^4$
and $|E_i|=16$ for $i=1,2,3.$ Assuming availability of sufficient persistent
memory homogeneous adjoint mode turns out to be optimal
for $F_2 \circ F_1.$ The Jacobian of
$F_3 \circ F_1$ is optimally computed in homogeneous tangent mode which
yields a conflict for $F'_1.$ Separate optimization of the two Jacobian chain
products $F'_2 \cdot F'_1$ and $F'_3 \cdot F'_1$ yields a cumulative
computational cost of $1 \cdot (16+16) + 2 \cdot (16+16)=96 \fma.$
A better solution is
$$
F'=\begin{pmatrix}
(I_1 \cdot \bar{F}_2) \cdot (\dot{F}_1 \cdot I_2) \\
\dot{F}_3 \cdot (\dot{F}_1 \cdot I_2)
\end{pmatrix}
$$
yielding a slight decrease in the computational cost to
$2 \cdot 16 + 1 \cdot 16 + 1 \cdot 4 \cdot 8 + 2 \cdot 16=88 \fma.$
More significant savings can be expected for less simple DAGs.

The foundations for matrix-free elimination techniques were laid in
\cite{Naumann2023ETf}. Their combination with the findings of this paper
remains the subject of ongoing research.

\subsection*{Acknowledgement}
The experiment with OpenFOAM was conducted by Erik Schneidereit.

\end{document}